\newtheorem{theorem}{Theorem}
\title{Transferable Curricula through Difficulty Conditioned Generators}
\author{
Sidney Tio
\and
Pradeep Varakantham
\affiliations
Singapore Management University\\
\emails
sidney.tio.2021@phdcs.smu.edu.sg, pradeepv@smu.edu.sg
}
\begin{document}

\maketitle

\begin{abstract}
  Advancements in reinforcement learning (RL) have demonstrated superhuman performance in complex tasks such as Starcraft, Go, Chess etc. However, knowledge transfer from Artificial ``Experts" to humans remain a significant challenge. A promising avenue for such transfer would be the use of curricula. Recent methods in curricula generation focuses on training RL agents efficiently, yet such methods rely on surrogate measures to track student progress, and are not suited for training robots in the real world (or more ambitiously humans). In this paper, we introduce a method named \textit{Parameterized Environment Response Model} (PERM) that shows promising results in training RL agents in parameterized environments. Inspired by Item Response Theory, PERM seeks to model difficulty of environments and ability of RL agents directly. Given that RL agents and humans are trained more efficiently under the ``zone of proximal development", our method generates a curriculum by matching the difficulty of an environment to the current ability of the student.  In addition, PERM can be trained offline and does not employ non-stationary measures of student ability, making it suitable for transfer between students. We demonstrate PERM's ability to represent the environment parameter space, and training with RL agents with PERM produces a strong performance in deterministic environments. Lastly, we show that our method is transferable between students, without any sacrifice in training quality.
\end{abstract}

\section{Introduction}

Consider the education of calculus. We know that there is a logical progression in terms of required knowledge before mastery in calculus can be achieved: knowledge of algebra is required, and before that, knowledge of arithmetic is required. While established progressions for optimal learning exists in education, they often require extensive human experience and investment in curriculum design. Conversely, in  modern video games, mastery requires hours of playthroughs and deliberate learning with no clear pathways to progression. In both cases, a coach or a teacher, usually an expert, is required to design such a curriculum for optimal learning. Scaffolding this curriculum can be tedious and in some cases, intractable. More importantly, it requires deep and nuanced knowledge of the subject matter, which may not always be accessible.

The past decade has seen an explosion of Reinforcement Learning (RL, \cite{sutton1998introduction}) methods that achieve superhuman performance in complex tasks such as DOTA2, Starcraft, Go, Chess, etc. (\cite{berner2019dota}, \cite{arulkumaran2019alphastar}, \cite{silver2016mastering}, \cite{silver2017mastering} Given the state-of-the-art RL methods, we propose to explore methods that exploit expert-level RL agents for knowledge transfer to humans and to help shortcut the learning process. One possible avenue for such a transfer to take place would be the use of curricula.

Recent methods in curricula generation explores designing a curricula through \textit{Unsupervised Environment Design} (UED, \cite{dennis2020emergent}). UED formalizes the problem of finding adaptive curricula in a teacher-student paradigm, whereby a teacher finds useful environments that optimizes student learning, while considering student's performance as feedback. While prior work in UED (e.g. \cite{parker2022evolving}, \cite{du2022takes}, \cite{li2023diversity}, \cite{li2023effective}) has trained high-performing RL students on the respective environments, these method rely on surrogate objectives to track student progress, or co-learn with another RL agent (\cite{dennis2020emergent}, \cite{du2022takes}, \cite{parker2022evolving}), both of which would be impractical for transfer between students (artificial or agents in real world settings alike).

For transfer between students, We require methods that do not use additional RL students, or are able to directly track student's learning progress. In this work, we introduce the Item Response Theory (IRT, \cite{embretson2013item}) as a possible solution to this problem. The IRT was developed as a mathematical framework to reason jointly about a student's ability and the questions which they respond to. Considered to be ubiquitous in the field of standardized testing, it is largely used in the design, analysis, and scoring of tests, questionnaires(\cite{hartshorne2018critical}, \cite{harlen2001assessment}, \cite{luniewska2016ratings}), and instruments that measure ability, attitudes, or other latent variables. The IRT allows educators to quantify the ``difficulty" of a given test item by modelling the relationship between a test taker's response to the test item and the test taker's overall ability. In the context of UED, we then see that the IRT provides a useful framework for us to understand the difficulty of a parameterized environment with regards to the ability of the student, of which we aim to maximize.

Our current work proposes a new algorithm, called \textit{Parameterized Environment Response Model}, or PERM. PERM applies the IRT to the UED context and generates curricula by matching environments to the ability of the student. Since we do not use a RL-based teacher or regret as a feedback mechanism, our method is transferable across students, regardless of artificial or human student.

Our main contributions are as follows:
\begin{enumerate}
  \item We propose PERM, a novel framework to assess student ability and difficulty of parameterized environments.
  \item PERM produces curricula by generating environments that matches the ability of the student.
  \item We investigate PERM's capabilities in modelling the training process with latent representations of difficulty and ability.
  \item We compare agents trained with PERM with other UED methods in parameterized environments.
\end{enumerate}

\begin{figure}[t!]
    \begin{subfigure}[b]{0.5\linewidth}
        {\includegraphics[width = \linewidth]{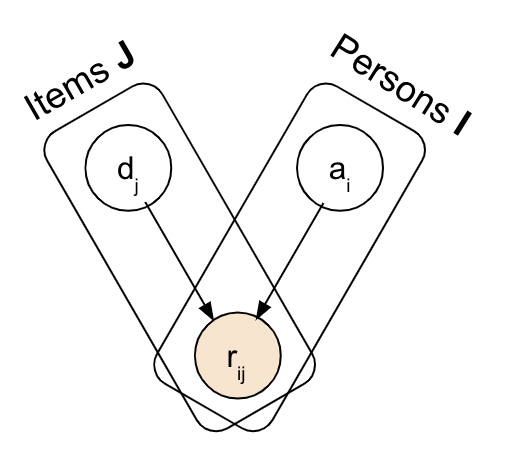}}
        \caption{Item Response Theory}
        \label{fig:IRT_graph}
    \end{subfigure}
    \hfill
    \begin{subfigure}[b]{0.4\linewidth}
        {\includegraphics[width = \linewidth]{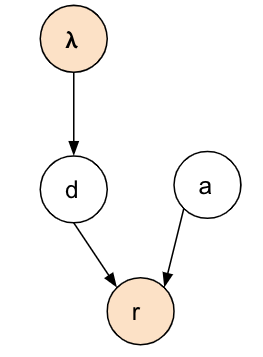}}
        \caption{PERM}
        \label{fig:PERM_graph}
    \end{subfigure}
    \caption{Graphical representation of IRT and PERM. $\lambda$, $a$, $d$, $r$ represents environment parameters, ability, difficulty, and response respectively. White nodes depict latent variables, while tan colored nodes represent observable variables.}
    \label{fig:graph}
\end{figure}

\section{Related Work}

\begin{figure*}
    \includegraphics[width=\textwidth, height = 6cm]{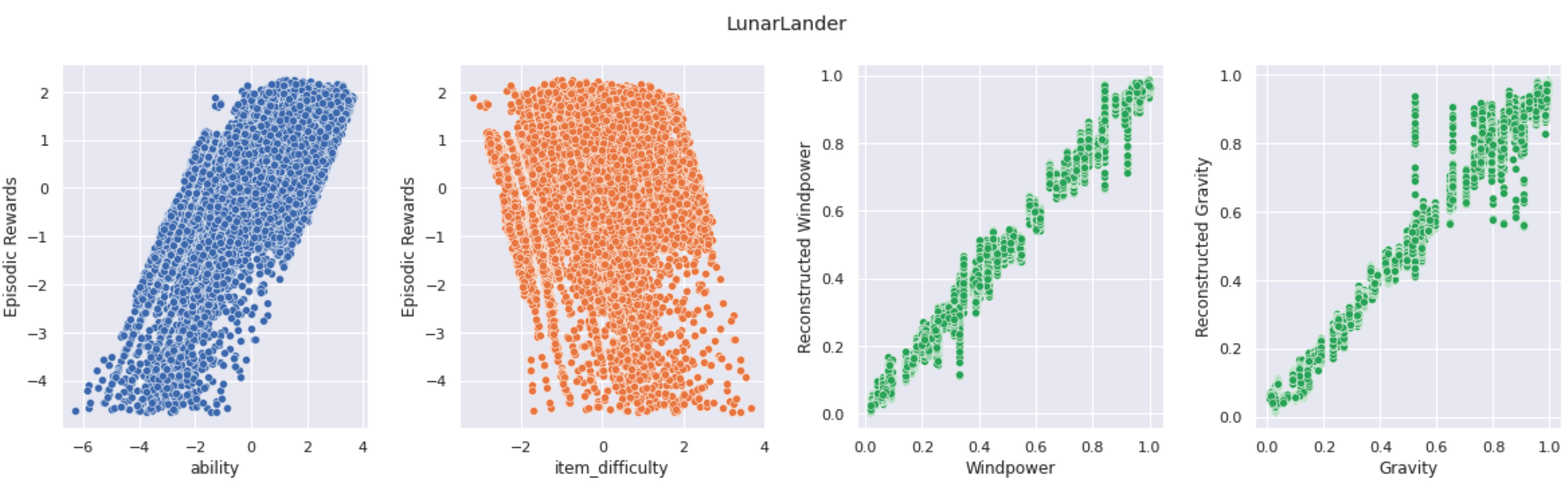}
    \caption{Analysis of PERM's reconstruction capabilities on LunarLander. Blue and orange plots represent ability and difficulty estimates against actual rewards achieved by agent; latent variables learned by PERM correspond to actual reward accordingly. Green plots visualizes the real environment parameters against parameters recovered by PERM. PERM is able to reconstruct the environment parameters from difficulty. Similar results are obtained in BipedalWalker, as seen in Figure \ref{fig:bipedal_analysis}}.
    \label{fig:lunarlander_analysis}
\end{figure*}

\subsection{Item Response Theory}
In Psychology and Education, the IRT is a method used to model interactions between a test taker's ability and a certain characteristic of a question, usually difficulty. The goal here is to gauge a student's ability based on their response to items of varying difficulties. The IRT has many forms, but for the purposes of this paper, we focus on the most standard form: the 1-Parameter Logistic (1PL) IRT, also known as the Rasch model \cite{rasch1993probabilistic}, and a continuous variant. The Rasch model  is given in Eq. \ref{eq:rasch},
\begin{align} \label{eq:rasch}
    p(r_{i,j} = 1 | a_i, d_j) = \frac{1}{1 + \exp^{-(a_i-d_j)}}
\end{align}
where $r_{i,j}$ is the response by the $i$-th person, with an ability measure $a_i$, to the $j$-th item, with a difficulty measure $d_j$. The graph of the 1PL IRT can be seen in Figure \ref{fig:IRT_graph}. We see that the Rasch model is equivalent to a logistic function. Therefore, the probability that a student answers the item correctly is a function of the difference between student ability ${a_i}$ and item difficulty $d_j$.

In RL settings, interactions between agent and environment is summarized by cumulative rewards achieved. For us to adopt the IRT for the environment training scenario, we can then replace the logistic function with a normal ogive model \cite{samejima1974normal}, or the cumulative distribution of a standard normal distribution. Eq \ref{eq:rasch} then becomes:
\begin{align} \label{eq:ogive}
    p(Z \leq r_{i,j} | a_i, d_j) = \frac{1}{\sqrt{2\pi}} \int_{-\infty}^{a_{i} - d_{j}}\exp\{-\frac{u^2}{2}\}du
\end{align}

To our knowledge, the IRT has not used to train RL agents, nor used as a knowledge transfer method from an RL agent to another student.

While earlier works have used different methods to perform inference for IRT, a recent method, VIBO \cite{wu2020variational}, introduces a variational inference approach to estimate IRT. More critically, the formulation of the IRT as a variational inference problem allows us to exploit the learned representation to generate new items. We discuss our modifications to VIBO in Section \ref{methods}.

\subsection{Zone of Proximal Development}
Prior work in UED discusses the zone of proximal development \cite{vygotsky1978mind}, loosely defined as problems faced by the student that are not too easy (such that there is no learning value for the student) and not too difficult (such that it is impossible for the student).

PAIRED \cite{dennis2020emergent} features an adversarial teacher whose task is to generate environments that maximizes the regret between the protagonist student and another antagonist agent. To apply our method to human training, a human-RL pairing would be necessary, but the difference in learning rates and required experiences could create bottlenecks for the human student (i.e. Moravec's Paradox \cite{moravec1988mind}).

PLR \cite{jiang2021prioritized}, and its newer variants (\cite{parker2022evolving}, \cite{jiang2021replay}), maintains a store of previously seen levels and prioritizes the replay levels where the average Generalized Advantage Estimate (GAE, \cite{schulman2015high}) is large. The use of GAE requires access to the value function of the student, a feature that is currently not operationalized for human subjects.

In summary, teacher-student curriculum generation approaches have predominantly focused on the zone of proximal development but have relied on surrogate objectives to operationalize it, without directly measuring difficulty or student ability. However, these surrogate objectives are often non-stationary and not easily transferable between students. Moreover, resulting curricula may not adequately accommodate large changes in student ability, which is a critical limitation for human subjects.

\subsection{Task Sequencing}
Related to UED are previous works in Task Generation and Sequencing for Curriculum Learning, which aim to generate and assign tasks to a student agent in a principled order to optimize performance on a final objective task \cite{narvekar2020curriculum}. Most of the literature in Task Generation focuses on modifying the student agent's MDP to generate different tasks (e.g. \cite{foglino2019optimization}, \cite{narvekar2017autonomous}, \cite{racaniere2019automated}. For example, \textit{promising initialization} \cite{narvekar2016source} modifies the set of initial states and generates a curriculum by initializing agents in states close to high rewards. On the other hand, \textit{action simplification} \cite{narvekar2016source} seeks to prune the action set of the student agent to reduce the likelihood of making mistakes.

In contrast to Task Generation, the UED framework investigates domains where there is no explicit representation of tasks. Here, the student agent must learn to maximize rewards across a variety of environment parameters in open-ended domains without a target ``final task" to learn. In the UED framework, the teacher algorithms only influences the environment parameters $\lambda$, while other features of the student's MDP remains relatively consistent across training. While prior works in task sequencing generate different tasks by directly modifying the student agent's MDP, we leave the curricula generation for such domains for future work.

\section{Method} \label{methods}

In this section we introduce a novel algorithm to train students, combining the training process with an IRT-based environment generator that acts as a teacher. The teacher's goal is to train a robust student agent that performs well across different environment parameters, while the student's goal is to maximize its rewards in a given environment. Unlike previous UED methods which relies on proxy estimates of difficulty and environment feasibility (e.g. Regret \cite{dennis2020emergent}), we propose to directly estimate difficulty by formulating the training process as a student-item response process and model the process with IRT.

Given a teacher model that is able to estimate both ability of the student and difficulty of an item, we are able to present a sequence of environments that are within the student's zone of proximal development. By providing environments within the zone, we are unlikely to generate infeasible environments that are impossible or too difficult for the current student, while also avoiding trivial environments that provide little to no learning value. As our method does not rely on non-stationary objectives such as regret, we are able to train PERM offline and transfer knowledge to any student agent, including human students. Lastly, because our method only relies on environment parameters and student response, it works in virtually any environment without requiring any expert knowledge. We show in later sections that PERM serves as a good exploration mechanism to understand the parameter-response relation given any environment.

PERM can be separated into two components: (i) learning latent representations of ability and difficulty; (ii) generating a curriculum during agent training. The algorithm is summarized in Algorithm \ref{alg:PERMRL}.

\begin{algorithm}[tb]
    \caption{Curriculum Generation for RL Agents with PERM}
    \label{alg:PERMRL}
    \textbf{Input}: Environment $E$, Environment parameters $\lambda$,\
    Student Agent $\pi$\\
    \textbf{Parameter}: $k$ episode frequency before update \\
    \textbf{Output}: Trained Student Agent, Trained PERM
    \begin{algorithmic}[1] 
        \STATE Let $t=0$, $\lambda_0 \sim \textit{Uniform}(\lambda)$\\
        \WHILE{not converged}
        \FOR{$k$ episodes}
        \STATE Collect Reward $r_{t}$ from agent $\pi$ playthrough of $E(\lambda_t)$.
        \STATE Estimate current ability $\mu_{a_t}, \sigma_{a_t}$ by computing $q_{\phi}(a|d,r,\lambda)$\\
        \STATE Sample current ability $a_t \sim N(\mu_{a_t}, \sigma_{a_t})$ \\
        \STATE Get next difficulty $d_{t+1} \gets a_t$ \\
        \STATE Generate next parameters $\lambda_{t+1} \gets p_{\theta}(\lambda|d_{t+1})$ \\
        \STATE $t \gets t + 1$\\
        \ENDFOR \\
        \STATE Update PERM with $\mathcal{L}_{PERM}$ \\
        \STATE RL Update on Student Agent \\
        \ENDWHILE
        \STATE \textbf{return} trained student agent $\pi$, trained PERM
    \end{algorithmic}
\end{algorithm}

\subsection{Preliminaries}
We draw parallels from UED to IRT by characterizing each environment parameter $\lambda$ as an item which the student agent with a policy $\pi_t$ 'responds' to by interacting and maximizes its own reward $r$. Specifically, each student interaction with the parameterized environment yields a tuple $(\pi_t,\lambda_t, r_t)$, where $\pi_t$ represents the student policy at $t$-th interaction, and achieves reward $r_t$ during its interaction with the environment parameterized by $\lambda_t$. We then use a history of such interactions to learn latent representations of student ability $a \in \mathbb{R}^n$ and item difficulty $d \in \mathbb{R}^n$, where $a \propto r$ and $d \propto \frac{1}{r}$. In this formulation, $\pi_t$ at different timesteps are seen as students independent of each other.

\subsection{Learning Latent Representations of Ability and Difficulty}
Following from Wu et al's Variational Item Response Theory (VIBO, \shortcite{wu2020variational}), we use a Variational Inference problem ~\cite{kingma2013auto} formulation to learn latent representation of any student interaction with the environment. More critically, VIBO proposes the amortization of the item and student space, which allows it to scale from discrete observations of items, to a continuous parameter space such as the UED. From here, we drop the subscript for the notations $a$, $d$, and $r$ to indicate our move away from discretized items and students.

\subsubsection{Enabling Generation of Environment Parameters}
The objective of VIBO is to learn the latent representation of student ability and difficulty of items. In order for us to generate the next set of environment parameters $\lambda_{t+1}$for the student to train on, we modify VIBO to include an additional decoder to generate $\lambda$ given a desired difficulty estimate $d$. The graphical form of PERM can be seen in Figure \ref{fig:PERM_graph}.\\

We state and prove the revised PERM objective based on Variational Inference in the following theorem. We use notation consistent with the Variational Inference literature, and refer the motivated reader to \cite{kingma2013auto} for further reading.
\begin{theorem}
    Let $a$ be the ability for any student, and $d$ be the difficulty of any environment parameterized by $\lambda$. Let $r$ be the continuous response from the student on the environment. If we define the PERM objective as

\begin{align} \label{eq:PERM}
    \mathcal{L}_{PERM}& \triangleq \mathcal{L}_{recon_r} + \mathcal{L}_{recon_{\lambda}} + \mathcal{L}_{A} + \mathcal{L}_{D} \\
\intertext{where }
    \mathcal{L}_{recon_r} &= \mathbb{E}_{q_{\phi}(a, d | r, \lambda)} [\log {p_{\theta}(r}| a, d)] \nonumber\\
    \mathcal{L}_{recon_{\lambda}} &= \mathbb{E}_{q_{\phi}(a, d | r, \lambda)} [\log {p_{\theta}(\lambda| d)}] \nonumber\\
    \mathcal{L}_{A} &= \mathbb{E}_{q_{\phi}(a, d | r,\lambda)} [\log \frac{p_(a)}{q_{\phi}(a| d, r, \lambda)}]  \nonumber \\
    &= \mathbb{E}_{q_{\phi}(d | r, \lambda)}[D_{KL}((q_{\phi}(a | d, r)\|p(a))] \nonumber\\
    \mathcal{L}_{D} &= \mathbb{E}_{q_{\phi}(a, d | r, \lambda)} [\log \frac{p(d)}{q_{\phi}(d | r, \lambda)}] \nonumber \\
    &= \mathbb{E}_{q_{\phi}(d | r,\lambda)} [\log \frac{p(d)}{q_{\phi}(d | r, \lambda)}] \nonumber \\
    &= D_{KL}((q_{\phi}(d | r,\lambda)\|p(d))
\intertext{and assume the joint posterior factorizes as follows:}
q_{\phi}(a, d | r, \lambda) &= q_{\phi}(a|d, r, \lambda) q_{\phi}(d| r, \lambda)
\end{align}

then $\log p(r) + \log p(\lambda) \geq \mathcal{L_{PERM}}$ ; $\mathcal{L_{PERM}}$ is a lower bound of the log marginal probability of a response $r$.
\end{theorem}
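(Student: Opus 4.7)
The plan is to derive an Evidence Lower Bound (ELBO) on the joint log-marginal $\log p(r,\lambda)$ in the PERM graphical model, essentially extending the VIBO derivation by one additional decoder term for $\lambda$. From Figure \ref{fig:PERM_graph} the generative factorization is $p(a, d, r, \lambda) = p(a)\, p(d)\, p_\theta(r \mid a, d)\, p_\theta(\lambda \mid d)$, with $a, d$ latent and $r, \lambda$ observed. I would first note that the statement ``$\log p(r) + \log p(\lambda) \ge \mathcal{L}_{PERM}$'' is most naturally read as the bound on the joint $\log p(r,\lambda)$, since $r$ and $\lambda$ are not marginally independent under this graph; the written form presumably treats them as separately observed quantities whose log-evidences are summed.

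Starting from $\log p(r,\lambda) = \log \iint p(a)\, p(d)\, p_\theta(r \mid a, d)\, p_\theta(\lambda \mid d)\, da\, dd$, I would introduce the variational posterior $q_\phi(a, d \mid r, \lambda)$ by multiplying and dividing inside the integral and then apply Jensen's inequality to obtain
\[
\log p(r,\lambda) \;\ge\; \mathbb{E}_{q_\phi(a,d\mid r,\lambda)}\!\left[\log \frac{p(a)\, p(d)\, p_\theta(r \mid a, d)\, p_\theta(\lambda \mid d)}{q_\phi(a, d \mid r, \lambda)}\right].
\]
Next I would substitute the assumed factorization $q_\phi(a, d \mid r, \lambda) = q_\phi(a \mid d, r, \lambda)\, q_\phi(d \mid r, \lambda)$ and split the logarithm linearly into four expectations. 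The expectations of $\log p_\theta(r \mid a, d)$ and $\log p_\theta(\lambda \mid d)$ immediately match $\mathcal{L}_{recon_r}$ and $\mathcal{L}_{recon_\lambda}$; the remaining two prior-over-posterior log-ratios become the KL-like terms $\mathcal{L}_A$ and $\mathcal{L}_D$.

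For the two KL-like terms the key simplification is that the integrand $\log p(d) - \log q_\phi(d \mid r, \lambda)$ does not depend on $a$, so by the tower property and the factorization of $q_\phi$ the outer expectation over $a$ collapses, reducing $\mathcal{L}_D$ to a single KL divergence between $q_\phi(d \mid r, \lambda)$ and $p(d)$. For $\mathcal{L}_A$ I would condition on $d$ first: the inner expectation over $a$ produces the KL divergence between $q_\phi(a \mid d, r, \lambda)$ and $p(a)$, which is then averaged over $d \sim q_\phi(d \mid r, \lambda)$, matching the stated form. The main obstacle I anticipate is purely bookkeeping: the KL expressions in the theorem are written as $\log(p/q)$ rather than in the canonical $q \log(q/p)$ form, so I would need to track signs carefully and make sure the factorization of $q_\phi$ is invoked in the right order so that each of $\mathcal{L}_A$ and $\mathcal{L}_D$ simplifies cleanly. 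No step requires an idea beyond the standard variational inference recipe applied to a two-latent, two-observation model.
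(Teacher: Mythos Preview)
Your proposal is correct and follows essentially the same route as the paper: introduce the variational posterior, apply Jensen's inequality to the log-marginal, and split the resulting expectation into the four terms using the assumed factorization of $q_\phi$. Your treatment is in fact more careful than the paper's, both in spelling out how the tower property collapses $\mathcal{L}_D$ and $\mathcal{L}_A$ to their KL forms and in flagging that the bound is really on $\log p(r,\lambda)$ rather than on the sum $\log p(r)+\log p(\lambda)$.
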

\begin{proof}
    Expand the marginal and apply Jensen's inequality:
    \begin{align*}
            \log p_{\theta}(r) + &\log p_{\theta}(\lambda) \ge \nonumber
             \mathbb{E}_{q_{\phi}(a, d | r)} [\log \frac{p_{\theta}(r,a,d, \lambda)}{q_{\phi}(a, d | r, \lambda)}] \\
            &= \mathbb{E}_{q_{\phi}(a, d | r}) [\log {p_{\theta}(r| a,d)}] \\
            &+ \mathbb{E}_{q_{\phi}(a, d | r)} [\log {p_{\theta}(\lambda| d)}] \\
            &+ \mathbb{E}_{q_{\phi}(a, d | r)} [\log \frac{p_(a)}{q_{\phi}(a| d , r, \lambda)}] \\
            &+ \mathbb{E}_{q_{\phi}(a, d | r)} [\log \frac{p_(d)}{q_{\phi}(d | r, \lambda)}] \\
            &= \mathcal{L}_{recon_r} + \mathcal{L}_{recon_{\lambda}} + \mathcal{L}_{A} + \mathcal{L}_{D}
    \end{align*}
Since $\mathcal{L}_{PERM}= \mathcal{L}_{recon_r} + \mathcal{L}_{recon_{\lambda}} + \mathcal{L}_{A} + \mathcal{L}_{D}$ and KL divergences are non-negative, we have shown that $\mathcal{L}_{PERM}$ is a lower bound on $\log p_{\theta}(r) + \log p_{\theta}(\lambda)$.

For easy reparameterization, all distributions $q_{\phi}(.|.)$ are defined as Normal distributions with diagonal covariance.
\end{proof}
\begin{figure*}[!htb]
    \includegraphics[width=\textwidth, height = 7cm]{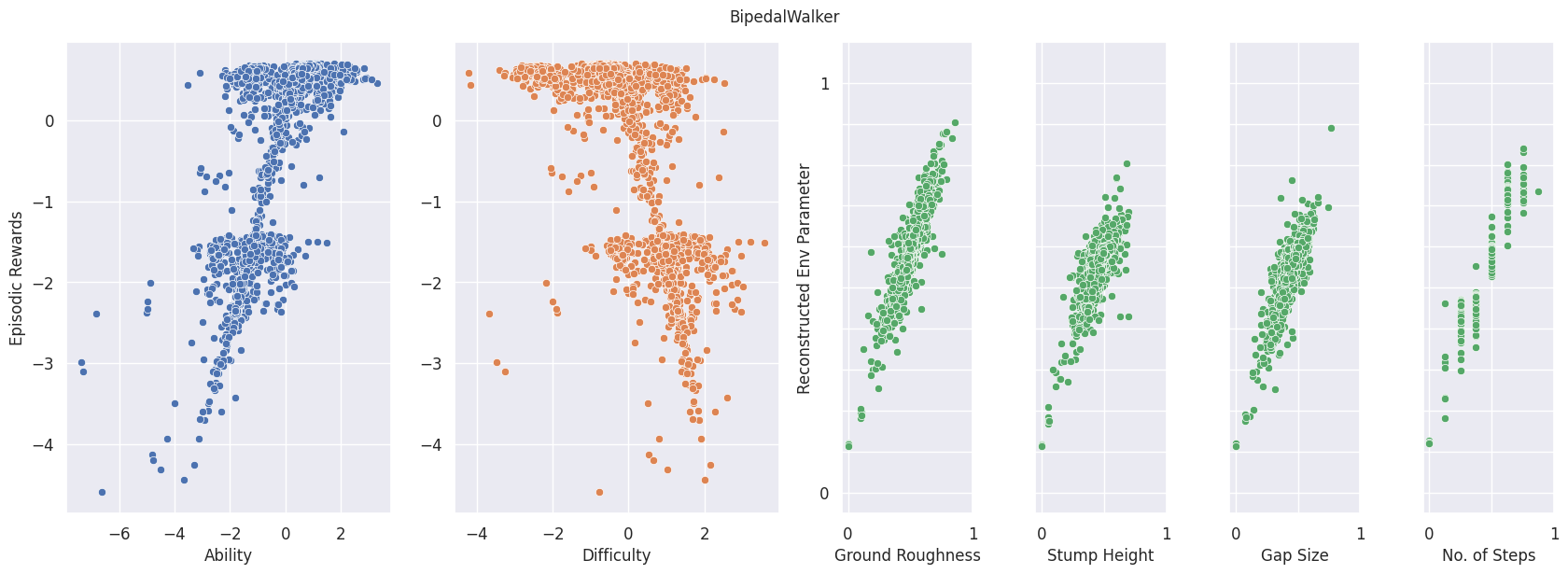}
    \caption{Analysis of PERM's reconstruction capabilities on BipedalWalker. Blue and orange plots represent ability and difficulty estimates against actual rewards achieved by agent; latent variables learned by PERM correspond to actual reward accordingly. Green plots visualizes the real environment parameters against parameters recovered by PERM. PERM is able to reconstruct the environment parameters from difficulty. Values presented are normalized.}
    \label{fig:bipedal_analysis}
\end{figure*}

\subsection{Generating Environments for Curricula}
Our method makes a core assumption that optimal learning takes place when the difficulty of the environment matches the ability of the student. In the continuous response model given in Eq. \ref{eq:ogive}, we see that when ability and difficulty is matched (i.e. $a_i = d_j$), the probability which the student achieves a normalized average score $r_{i,j} = 0$ is 0.5. This is a useful property to operationalize the zone of proximal development, as we can see that the model estimates an equal probability of the student overperforming or underperforming.

Training is initialized by uniformly sampling across the range of environment parameters. After each interaction between the student and the environment, PERM estimates the ability $a_t$ of the student given the episodic rewards and parameters of the environment. PERM then generates the parameters of the next environment $\lambda_{t+1} \sim p_\theta(\lambda|d_{t+1})$ where $d_{t+1} = a_t$.

\section{Experiments}
In our experiments, we seek to answer the following research questions (RQ): \textbf{RQ1}: How well does PERM represent the environment parameter space with ability and difficulty measures? \textbf{RQ2}: How do RL Agents trained by PERM compare to other UED baselines?

We compare two variants of PERM, PERM-Online and PERM-Offline, with the following baselines: {PLR$^\bot$} (Robust Prioritized Replay, \cite{jiang2021replay}), {PAIRED} \cite{dennis2020emergent}, {Domain Randomization}(DR, \cite{tobin2017domain}). PERM-Online is our method that is randomly initialized and trained concurrently with the student agent, as described in Algorithm \ref{alg:PERMRL}; PERM-Offline is trained separately from the student agent, and remains fixed throughout the student training. PERM-Offline is used to investigate its performance when used in an offline manner, similar to how we propose to use it for human training.

For all experiments, we train a student PPO agent \cite{schulman2017proximal} in OpenAI Gym's \textit{LunarLander} and \textit{BipedalWalker} \cite{brockman2016openai}.

We first evaluate PERM's effectiveness in representing the parameter space on both OpenAI environments. Specifically, we evaluate how the latent variables ability $a$ and difficulty $d$ correlate to the rewards obtained in each interaction, as well as its capability in generating environment parameters. We then provide a proof-of-concept of PERM's curriculum generation on the \textit{LunarLander} environment, which has only two environment parameters to tune. Lastly, we scale to the more complex \textit{BipedalWalker} environment that has eight environment parameters, and compare the performance of the trained agent against other methods using the same evaluation environment parameters as in Parker-Holder et al \shortcite{parker2022evolving}.

\subsection{Analyzing PERM’s Representation of Environment Parameters}

\begin{table}[t]
    \begin{tabular}{lrrr}
    \hline
    Env & Response  MSE & $\lambda$ MSE & R-Squared \\
    \hline
    LunarLander   & $7.8 \times10^{-5}$  & 0.001                        & 1.00 \\
    BipedalWalker & $2.5 \times10^{-4}$ & 0.001                        & 0.986 \\
    \hline
    \end{tabular}
    \caption{Analysis of PERM's recovery capabilities. PERM is able to reconstruct the response and environment parameters with great accuracy. R-squared is obtained by regressing ability and difficulty on response.}
    \label{table:perm}
\end{table}
\begin{figure*}[!t]
    \includegraphics[width = \linewidth, height = 0.3\textheight]{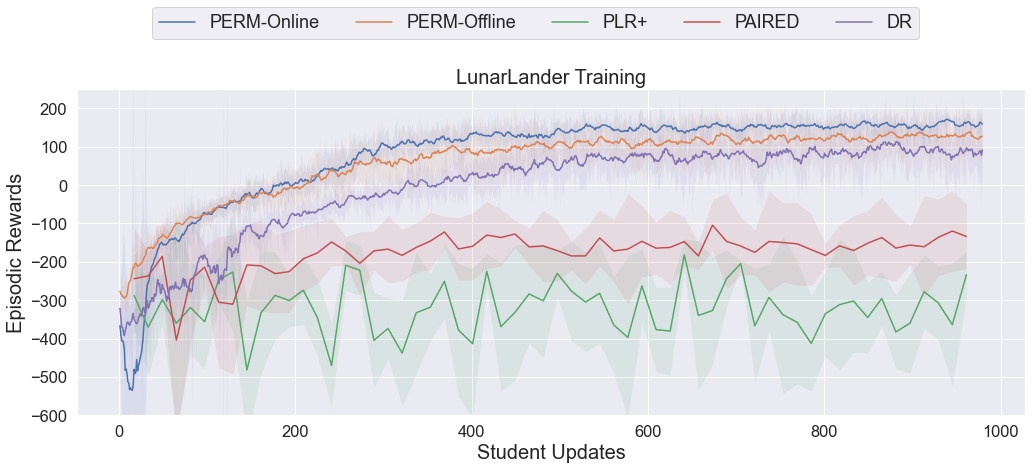}
    \\ \hspace{5mm} \\
    \includegraphics[width = \linewidth, height = 0.3\textheight]{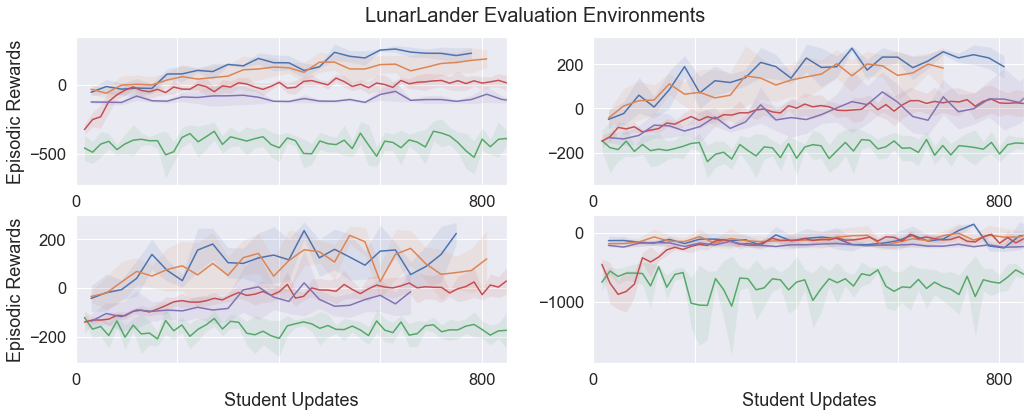}
    \\ \hspace{5mm} \\
    \includegraphics[width = \linewidth, height = 0.3\textheight]{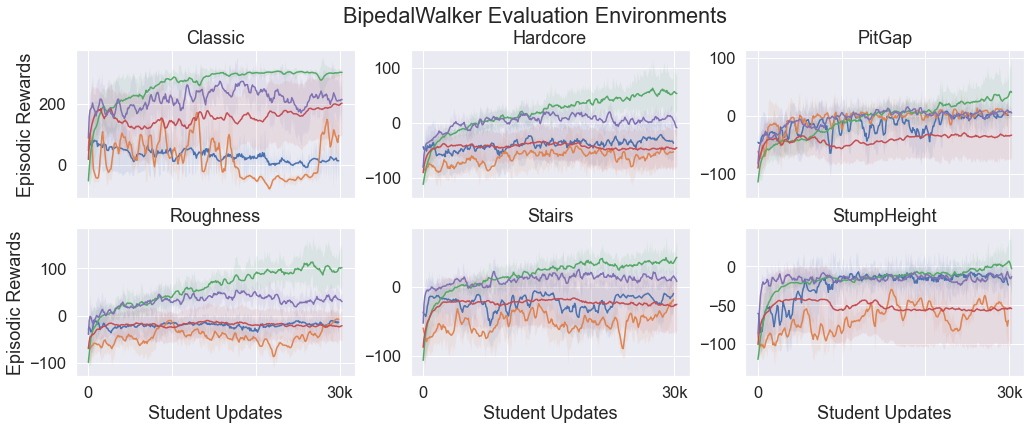}
    \caption{Agents trained by PERM-Online and Offline outperform other methods on LunarLander in both training and evaluation environments. Top: Performance on LunarLander during training; Middle: Performance on selected LunarLander evaluation environments; Bottom: Performance on BipedalWalker evaluation environments.}
    \label{fig:curricula}
\end{figure*}
We begin by investigating PERM's capabilities in representing and generating the environment parameters. In order to establish PERM's capabilities for curricula generation purposes, PERM needs to demonstrate the following: i) the latent representations ability $a$ and difficulty $d$ needs to conform to proposed relationship with response $r$ (i.e. $a \propto r$ and $d \propto \frac{1}{r}$); ii) given input environment parameters $\lambda$ and response $r$, the reconstructed environment parameters $\lambda'$ and response $r'$ need to match its inputs. For both analyses, we rely on correlation metrics and mean-squared error (MSE) to establish PERM's capabilities.

We first train PERM by collecting agent-environment interactions from training a PPO agent under a DR framework until convergence. We then train an offline version of PERM using a subset of data collected and Equation \ref{eq:PERM}. We use the remaining data collected as a holdout set to evaluate PERM's performance. The results are visualized in Figure \ref{fig:lunarlander_analysis} and Figure \ref{fig:bipedal_analysis}, and summary statistics are provided in Table \ref{table:perm}.
As we see in both plots, the latent representations $a$ (blue) and $d$ (orange) largely correlates with our expectations of its respective relationships with the response variable $r$. When both ability and difficulty are regressed against the response variable, we achieve a R-squared of 1.00 and 0.986 for LunarLander and BipedalWalker respectively, indicating that both latent representations are perfect predictors of reward achieved by an agent in a given parameterized environment.
Turning to PERM's capability in generating environment parameters (Figure \ref{fig:lunarlander_analysis} \& \ref{fig:bipedal_analysis}, green), we see that PERM achieves near perfect recovery of all environment parameters on the test set, as indicated by the MSE between input parameters and recovered parameters.
Taking the strong results of PERM in recovering environment parameters from the latent variables, we proceed to generate curricula to train RL Agents.

\subsection{Training RL Agents with PERM}
\subsubsection{LunarLander}

We next apply PERM's environment generation capabilities to train an agent in LunarLander. In this domain, student agents control the engine of a rocket-like structure and is tasked to land the vehicle safely. Before each episode, teacher algorithms determine the gravity and wind power present in a given playthrough, which directly effects the difficulty of landing the vehicle safely. We train student agents for $1e6$ environment timesteps, and periodically evaluate the agent on test environments. The parameters for the test environments are randomly generated and fixed across all evaluations, and are provided in the Appendix. We report the training and evaluation results in Figure \ref{fig:curricula} top and middle plots respectively. As we see, student agents trained with PERM achieves stronger performance over all other methods, both during training and evaluation environments. More importantly, we note that despite training PERM-Offline on a different student, the RL agent under PERM-Offline still maintains its training effectiveness over other methods.

We note that despite a reasonably strong performance of an agent trained under DR, DR has a greater possibility of generating environments that are out of the student's range of ability. We observe that episode lengths for students trained under DR are shorter (mean of 244 timesteps vs 311 timesteps for PERM), indicating a larger proportion of levels where the student agent immediately fails. PERM, by providing environments that are constantly within the student's capabilities, is more sample efficient than DR.

\subsubsection{BipedalWalker}

Finally, we evaluate PERM in the modified BipedalWalker from Parker-Holder et al. \shortcite{parker2022evolving}. In this domain, student agents are required to control a bipedal vehicle and navigate across a terrain. The teacher agent is tasked to select the range of level properties in the terrain, such as the minimum and maximum size of a pit. The environment is then generated by uniformly sampling from the parameters. We train agents for about 3 billion environment steps, and periodically evaluate the agents for about 30 episodes per evaluation environment. The evaluation results are provided in Figure \ref{fig:curricula}, bottom. In the BipedalWalker's evaluation enviroments, student agent trained by PERM produced mixed results, notably achieving comparable performance to PLR$^\bot$ in the \texttt{StumpHeight} and \texttt{PitGap} environment, and comparable performance to PAIRED in others. As BipedalWalker environment properties are sampled from the environment parameters generated by the teacher, it is likely that the buffer-based PLR$^\bot$ that tracks seeds of environments had a superior effect in training our student agents. PERM, on the other hand, is trained to only generate the ranges of the environment properties, which results in non-deterministic environment generation despite the same set of parameters.

\section{Conclusion and Future Work}
We have introduced PERM, a new method that characterizes the agent-environment interaction as a student-item response paradigm. Inspired by Item Response Theory, we provide a method to directly assess the ability of a student agent, and the difficulty associated with parameters of a simulated environment. We proposed to generate curricula by evaluating the ability of the student agent, then generating environments that match the ability of the student. Since PERM does not rely on non-stationary measures of ability such as Regret, our method allows us to predict ability and difficulty directly across different students. Hence, our approach is transferable and is able to adapt to learning trajectories of different students. Theoretically, we could use PERM to train humans in similarly parameterized environments.

We have demonstrated that PERM produces strong representation of both parameterized environments, and is a suitable approach in generating environment parameters with desired difficulties. Finally, we trained RL agents with PERM in our selected environments, and found that our method outperformed the other methods in the deterministic environment, LunarLander.

Most recently, Zhuang et al. \shortcite{zhuang2022fully}  proposed to use a IRT-based model in Computerized Adaptive Testing (CAT) on humans, to some success. The objective of CAT is to accurate predict the student's response to a set of future questions, based on her response to prior questions. We look forward to deploying PERM or IRT-based models in real world settings for training purposes. We hope that our results inspire research into methods that are able to train both humans and RL Agents effectively.

\section*{Acknowledgements}
This research/project is supported by the National Research Foundation, Singapore under its AI Singapore Programme (AISG Award No: AISG-PhD/2022-01-025).

\bibliographystyle{named}
\bibliography{ijcai23}

\end{document}